\algnewcommand{\LineComment}[1]{\State \(\triangleright\) #1}
\newtheorem{proposition}{Proposition}
\DeclareMathOperator*{\argmin}{arg\,min}
\def\BibTeX{{\rm B\kern-.05em{\sc i\kern-.025em b}\kern-.08em
    T\kern-.1667em\lower.7ex\hbox{E}\kern-.125emX}}
\newcommand{\txt}[1]{#1}    
\begin{document}

\title{Robust Training of Vector Quantized\\Bottleneck Models
}

\makeatletter
\newcommand{\AND}{%
  \end{@IEEEauthorhalign}
  \hfill\mbox{}\par
  \mbox{}\hfill\begin{@IEEEauthorhalign}
}
\makeatother

\author{%
\IEEEauthorblockN{\makebox[0.33\textwidth][c]{Adrian Łańcucki}}
\IEEEauthorblockA{\makebox[0.33\textwidth][c]{\textit{NVIDIA Corporation}}}
\IEEEauthorblockA{\makebox[0.33\textwidth][c]{Warsaw, Poland}}
\and
\IEEEauthorblockN{\makebox[0.33\textwidth][c]{Jan Chorowski}}
\IEEEauthorblockA{\makebox[0.33\textwidth][c]{\textit{University of Wrocław}}}
\IEEEauthorblockA{\makebox[0.33\textwidth][c]{Wrocław, Poland}}
\and
\IEEEauthorblockN{\makebox[0.33\textwidth][c]{Guillaume Sanchez}}
\IEEEauthorblockA{\makebox[0.33\textwidth][c]{\textit{Université de Toulon, LIS}}}
\IEEEauthorblockA{\makebox[0.33\textwidth][c]{Toulon, France}}
\AND
\IEEEauthorblockN{\makebox[0.33\textwidth][c]{Ricard Marxer}}
\IEEEauthorblockA{\makebox[0.33\textwidth][c]{\textit{Université de Toulon, LIS}}}
\IEEEauthorblockA{\makebox[0.33\textwidth][c]{Toulon, France}}
\and
\IEEEauthorblockN{\makebox[0.33\textwidth][c]{Nanxin Chen}}
\IEEEauthorblockA{\makebox[0.33\textwidth][c]{\textit{Johns Hopkins University}}}
\IEEEauthorblockA{\makebox[0.33\textwidth][c]{Baltimore, USA}}
\and
\IEEEauthorblockN{\makebox[0.33\textwidth][c]{Hans J.G.A. Dolfing}}
\AND
\IEEEauthorblockN{\makebox[0.33\textwidth][c]{Sameer Khurana}}
\IEEEauthorblockA{\makebox[0.33\textwidth][c]{\textit{Massachusetts Institute of Technology}}}
\IEEEauthorblockA{\makebox[0.33\textwidth][c]{Cambridge, USA}}
\and
\IEEEauthorblockN{\makebox[0.33\textwidth][c]{Tanel Alumäe}}
\IEEEauthorblockA{\makebox[0.33\textwidth][c]{\textit{Tallinn University of Technology}}}
\IEEEauthorblockA{\makebox[0.33\textwidth][c]{Tallinn, Estonia}}
\and
\IEEEauthorblockN{\makebox[0.33\textwidth][c]{Antoine Laurent}}
\IEEEauthorblockA{\makebox[0.33\textwidth][c]{\textit{Le Mans University}}}
\IEEEauthorblockA{\makebox[0.33\textwidth][c]{Le Mans, France}}
}

\maketitle

\begin{abstract}
In this paper we demonstrate methods for reliable and efficient training of discrete representation using Vector-Quantized Variational Auto-Encoder models (VQ-VAEs).
Discrete latent variable models have been shown to learn \txt{nontrivial} representations of speech, \txt{applicable to} unsupervised voice conversion and reaching state-of-the-art performance on unit discovery tasks. For unsupervised representation learning,
they became viable alternatives to continuous latent variable models such as the Variational Auto-Encoder (VAE).
However, training deep discrete variable models is challenging, due to the inherent non-differentiability of the discretization operation.
In this paper we focus on VQ-VAE, a state-of-the-art discrete bottleneck model shown to perform on par with its continuous counterparts.
It quantizes encoder outputs with on-line $k$-means clustering.
We show that the codebook learning can suffer from poor initialization
and non-stationarity of clustered encoder outputs.
\txt{We demonstrate that these can be successfully overcome by increasing the learning rate for the codebook and periodic date-dependent codeword re-initialization.}
As a result, we achieve more robust training across different \txt{tasks}, %
and significantly increase the usage of latent codewords even for large codebooks.
This has practical benefit, for instance, in
unsupervised representation learning,
where large codebooks may lead to disentanglement
of latent representations.
\end{abstract}

\begin{IEEEkeywords}
VQ-VAE, $k$-means, discrete information bottleneck
\end{IEEEkeywords}

\section{Introduction}
Automatic extraction of useful features is one of the hallmarks of intelligent systems. Well designed models generalize better by filtering their inputs to retain only the information that is relevant to the task at hand~\cite{tishby1999information}. Bottleneck layers, which induce this behavior, are especially important in %
feature learning.
Simple bottlenecks indirectly reduce the amount of information allowed through by limiting feature dimensionality~\cite{vesely2012language,yu2011improved}. The amount of information can be constrained directly, as in variational autoencoders,
where a continuous stochastic bottleneck represents each sample with a probability distribution~\cite{kingma_auto_2013}.

Discrete latent variable models also offer direct control over the information content of the learned representation. Recently, such models have shown great results in unsupervised learning of speech and image representations. A very popular model, the Vector-Quantized Variational Auto-Encoder (VQ-VAE) \cite{oord_neural_2017} encodes a speech signal as a sequence of discrete latent units. Compared with continuous latent variable models, the discrete VQ-VAE units have been empirically shown to correlate well with the phonetic content of the utterance, allowing advanced speech manipulation such as voice conversion~\cite{oord_neural_2017}. Discrete representations learned by VQ-VAE were shown to \txt{achieve} state-of-the-art results on unit discovery tasks, yielding a good separation between speaker traits and the phonetic content of speech \cite{chorowski_2019_unsupervised}. Due to this property, in the recently organized ZeroSpeech 2019 ``TTS Without T'' challenge \cite{Dunbar2019} VQ-VAE was a popular technique employed in many submissions \cite{Tjandra2019,Eloff2019} and led to best performance on the unit-discovery task.

However, training deep discrete latent representation models
is challenging due to the lack of a derivative function. %
One solution is to assume that the system is stochastic and optimize the mean output of discrete random units \cite{jang_categorical_2016}. In contrast, VQ-VAE is a deterministic discrete variable model which operates by solving an approximate on-line $k$-means clustering in which the gradient is approximated using the straight-through estimator \cite{bengio_estimating_2003}.
Therefore VQ-VAE can be trained using gradient descent like any other deep learning model. However we observe that it also
poses typical $k$-means challenges such as
sensitivity to initialization
and non-stationarity of clustered neural activations during training.
Moreover, the difficulty of $k$-means increases with the number of centroids, and the ability to encode the signal with a broad number of discrete codes is desirable for unsupervised representation learning~\cite{chorowski2019segmental}.

In this paper we discuss the training dynamics of VQ-VAE models, compare different learning methods, and propose alternatives.
First, we demonstrate the importance of proper initialization scale of the codebook relative to encoder outputs.
Then, we apply a deliberate, data-dependent initialization to the codebook. To address non-stationarity, we treat the problem as a sequence of static clustering problems, and periodically re-initialize the codebook from scratch with off-line clustering of recent encoder outputs. %
Lastly,
we \txt{evaluate} other \txt{possible training} improvements: separately tuned \txt{codebook} learning rates, and active \txt{balancing of latent feature magnitudes with codebook entries through batch normalization}~\cite{ioffe2015batch}. \txt{We establish that best results are obtained when all three mechanisms are present: batch normalization, data-dependent codebook re-initialization, and separately tuned learning rates.}

\txt{In the following sections we introduce the Vector Quantized bottleneck, describe the motivation for the proposed enhancements, and their relation to proposed alternations to the learning rate. We then present experiments on three different problem domains. Finally, we put our work in the broader context of representation learning.}

\section{Background}
Consider a multilayered neural network trained to solve either a supervised or an unsupervised task. Let one of its hidden layers be called a \emph{bottleneck} layer. We will refer to the values taken by the bottleneck layer as \emph{latent features}. The \emph{encoder} subnetwork takes input $x$ and computes latent features $e(x)$, which are then fed to the bottleneck. Its task is to remove spurious information from $e(x)$.

\txt{For instance, auto-encoding neural networks rely on a bottleneck layer placed between the encoder and decoder to prevent learning features which are a trivial transformation of their raw inputs. A common bottleneck choice is e.g. dimensionality reduction used in PCA and SVD. However, bottleneck layers are also useful in supervised learning, e.g. the Projected LSTM architecture for ASR introduces a dimensionality reduction layer between recurrent cells \cite{sak_long_2014}.}

Vector Quantized VAE~\cite{oord_neural_2017} introduced a bottleneck that quantizes neural activations $e(x)$, \txt{directly limiting the information content of the extracted features}. It learns a codebook of $K$ codewords $w_i\in\mathbb{R}^d$, and replaces each activation $e(x)$ with the nearest codeword:
\begin{equation}
      q(x) = \argmin_{w_i}\ \lVert e(x) - w_i\rVert.
\end{equation}

Regardless of the dimensionality of $q(x)$, the transmitted information is only about the index $i$ of the chosen codeword $w_i$ out of all $K$ codewords, and thus limited to $\log_2 K$ bits.  %

The network is trained using gradient optimization. Since quantization is non-differentiable, its derivative can only be approximated. VQ-VAE relies on the straight-through estimator $\partial\mathcal{L}/\partial e(x) \approx \partial\mathcal{L}/\partial q(x)$~\cite{bengio_estimating_2003}, and has additional loss terms for training the codebook~\cite{oord_neural_2017}:
\begin{equation}\label{eq:loss}
\begin{aligned}                                                         
    \mathcal{L} = L(q(x)) & + ||\mathrm{sg}[e(x)] - q(x)||^2 \\
                  & + \gamma||e(x) - \mathrm{sg}[q(x)]||^2,
\end{aligned}                                                           
\end{equation}
where $\mathrm{sg}[\cdot]$ denotes the stop gradient operation. The first term $L(q(x))$ corresponds to the task loss, for instance the negative log-likelihood of the reconstruction
\begin{equation}
L(q(x))=-\log p(x|q(x))
\end{equation}
in an autoencoder. The second term trains the codebook, by moving each codeword closer to the vector it replaced. As the result, it performs on-line $k$-means, with the codewords being the centroids. From now on we will use those terms interchangeably.
The third term optimizes the encoder to produce
representations close to the assigned codewords, with $\gamma\in\mathbb{R}$ being a scaling hyperparameter.

The quantized signal $q(x)$ is \txt{then processed by the rest of the network. In an auto-encoder, a \textit{decoder} subnetwork uses $q(x)$ to reconstruct the input $x$. However, the quantized signal can also be used to perform supervised classification.}

\begin{figure}[tb]
\begin{minipage}{0.0125\columnwidth}\hspace{0.01cm}\end{minipage}
\begin{minipage}{0.99\columnwidth}
\centering
{\resizebox{0.9\textwidth}{!}{\footnotesize\input{init.tex}}}\\(a)%
\end{minipage}\\
\begin{minipage}{0.49\columnwidth}
\vspace{3mm}
\centering
\includegraphics[scale=0.52]{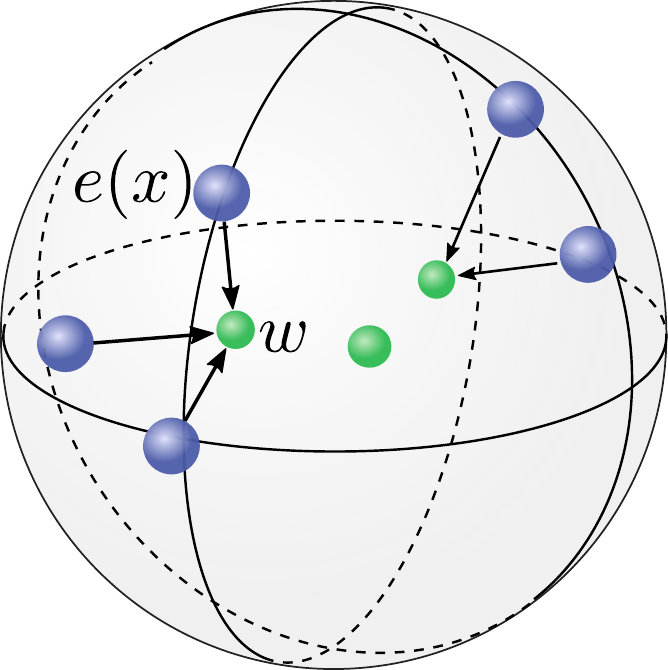}\\ (b) $\lvert e(x_i)\rvert \gg \lvert w_j\rvert$
\end{minipage}
\begin{minipage}{0.49\columnwidth}
\vspace{3mm}
\centering
\includegraphics[scale=0.52]{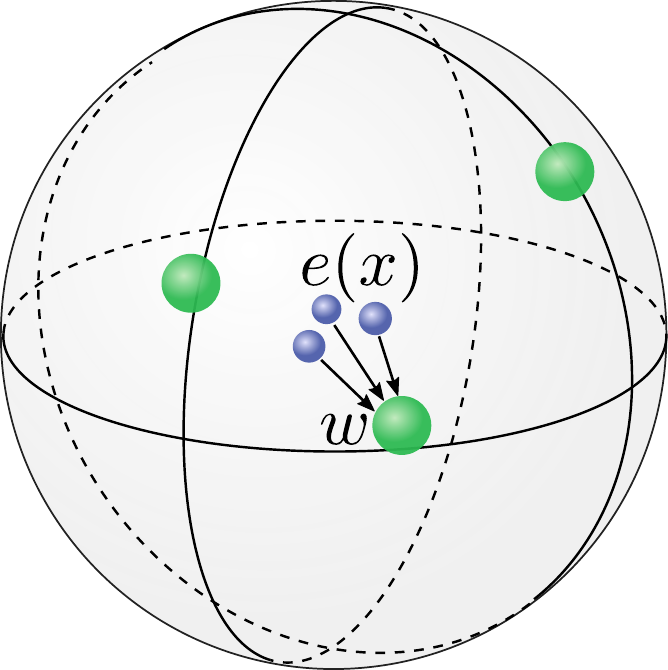}\\ (c) $\lvert e(x_i)\rvert \ll \lvert w_j\rvert$
\end{minipage}
\vspace{1mm}
\caption{The impact of scale of encoder outputs relative to the scale of codebook words shown in 3-D. \textbf{(a)} Relative scale of codewords $w$ and encoder outputs $e(x)$ impacts performance of mapping bottleneck features to symbols on a subset of ScribbleLens (see Section~\ref{sec:scribble} for details).
\textbf{(b)} If the encoder outputs are larger, multiple codewords are likely to be used. \textbf{(c)} If the encoder outputs are smaller, they tend to cluster and map to fewer codewords.}\label{fig:init-scale}
\end{figure}

\section{Intuitions about VQ training dynamics}
In this section we present practical observations on training of Vector Quantized bottlenecks
with the straight-through estimator, as proposed in~\cite{oord_neural_2017}.
First of all, only the second term of the loss~(\ref{eq:loss}) affects codewords,
implying that \txt{training algorithm only modifies the centroids that were selected.}
As a result, the training is prone to getting stuck in poor local optima,
in which only a \txt{small} subset of all codewords is in use.
This permanent low codebook usage prevents learning rich data representations,
because to\txt{o} little information is being passed through the bottleneck.

\subsection{Importance of proper scaling}
\label{sec:scaling}
We \txt{empirically} observe that in order \txt{to maximize codebook usage}, the codewords have to be %
smaller in norm than the encoder outputs $e(x)$ (Fig.~\ref{fig:init-scale}a).
During quantization, every output of the encoder $e(x)$ is matched to the nearest codeword~$w$. 
\txt{If the codewords are smaller in norm, the selection depends on the angular distance of the encoded representation and the codeword (Fig.~\ref{fig:init-scale}b), and many codewords are used.} However, if the codewords are larger in norm, all encoder outputs are likely to be assigned to the smallest codeword (Fig.~\ref{fig:init-scale}c). This blocks the training signal from reaching remaining codewords, and leads to underutilization of the codebook. %
\txt{Since the scale of feature activations can change during network training, we investigate the use of} batch normalization to \txt{enforce a magnitude} ratio between $e(x)$ and $w$.

\subsection{Batch data-dependent codebook updates}
Optimal $k$-means codebook initialization is data-dependent~\cite{arthur_kmeans_2007}, and  most usage scenarios of $k$-means assume stationarity of the data. However, during training of the encoder, the representations $e(x)$ changes with every iteration. These changes might be too rapid for the codebook to adapt, especially in the light of sparse gradient updates, that influence only the selected codewords.

For this reason, we propose to perform periodic data-dependent codebook \txt{reestimation during} %
a number of initial training steps (Algorithm~\ref{alg:reestim}).
\begin{algorithm}[tb]
\begin{algorithmic}
\For{$it \in {1, \ldots iterations}$}
    \State{$x = \text{encoder}(input)$}
    \State{$reservoir \gets \text{update\_reservoir}(reservoir, x)$}
    \If{$it < M_\textit{init}$}
        \State $quantized = x$
    \Else
        \If{$it\ \%\ r_\textit{reestim} == 0 \textbf{ and } it < M_\textit{init} + M_\textit{reestim}$}
            \State $codebook = \text{k-means++}(reservoir)$
        \EndIf
        \State $quantized = \text{nearest\_codes}(codebook)$
    \EndIf
    \LineComment{Continue forward pass}
\EndFor
\end{algorithmic}
\caption{Data-dependent Reestimated Vector Quantization}\label{alg:reestim}
\end{algorithm}
During training, we continually maintain a sample of outputs from the encoder using reservoir sampling \cite{Vitter1985reservoir}. During the first $M_{\textit{init}}$ warm-up iterations we perform no quantization, letting the network to initially stabilize outputs of the encoder.

After $M_{\textit{init}}$ iterations, \txt{we initialize the codebook by applying $k$-means++~ clustering~\cite{arthur_kmeans_2007} to samples collected in the reservoir. This ensures that all codewords are initially likely to be used.}
\txt{Afterward, we} periodically reestimate the codebook during the next $M_{\textit{reestim}}$ iterations, to help it keep up with changes to the \txt{distribution of encoder outputs.} %
By doing so, we treat the non-stationary problem of clustering \txt{rapidly changing} encoder outputs as a sequence of stationary problems. %

After this prolonged data-dependent initialization, we train the model using the regular Vector Quantization (VQ) algorithm. This fine-tunes the codebook \txt{and allows} the model to learn an efficient quantization which discards unimportant information. One of the advantages of data-dependent initialization is its robustness to the choice of the $M_{\textit{reestim}}$ parameter, leading \txt{to better} results than the vanilla training algorithm.

\subsection{EMA: an alternative training rule}

\txt{To improve the stability of VQ training, an} alternative codebook learning rule, based on exponential moving averages (EMA) was recently proposed \cite{oord_neural_2017,roy_theory_2018,razavi_generating_2019}. Below, we show that it is in fact equivalent to the the original VQ-VAE training criterion with a codebook-specific learning rate, and as such does not address the problems of codebook initialization and non-stationarity. 

Denote by $x_1,\ldots,x_{n_i}$ the training samples and by $e(x_1),\ldots,e(x_{n_i})$ the corresponding encoder outputs attracted by a codeword $w_i$ at training step $t$ 
(we omit $t$ for brevity).
It follows that a codeword $w_i$ is updated with those $n_i$ closest encoder outputs. Specifically, for every codeword $w_i$, the mean code $m_i$ and usage count $N_i$ is tracked:
\begin{align}
\begin{split}
N_i &\gets N_{i} \cdot \gamma + n_i (1 - \gamma), \\ %
m_i &\gets m_{i} \cdot \gamma + \sum_j^{n_i} e(x_j)(1 - \gamma),\label{eq:ema2}\\  %
w_i &\gets \frac{m_i}{N_i}, %
\end{split}
\end{align}     
where $\gamma$ is a discount factor.

We establish that the difference between the EMA training rule and the ordinary VQ-VAE codebook training algorithm is mostly adaptive rescaling of the codebook learning rate. 
\begin{proposition}
The EMA update rule (\ref{eq:ema2}) with constant usage counts $N_i=1$ is equivalent to an SGD update for ordinary loss (\ref{eq:loss}) with a rescaling learning rate $\alpha = (1-\gamma)/2$.
\end{proposition}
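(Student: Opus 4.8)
The plan is to directly compare the two update rules for a single codeword $w_i$ and read off the learning rate that makes them coincide. First I would substitute $N_i = 1$ into the EMA recursion~(\ref{eq:ema2}). Because $w_i = m_i/N_i = m_i$, the mean-code update collapses into a recursion on $w_i$ itself, namely
\begin{equation*}
w_i \gets \gamma\, w_i + (1-\gamma)\sum_j^{n_i} e(x_j).
\end{equation*}
Rewriting this in residual form gives $w_i \gets w_i - (1-\gamma)\bigl(w_i - \sum_j^{n_i} e(x_j)\bigr)$, which already has the shape of a gradient-descent step toward the attracted encoder outputs.

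Next I would compute the actual SGD step coming from loss~(\ref{eq:loss}). Only the middle term $\lVert \mathrm{sg}[e(x)] - q(x)\rVert^2$ depends on the codebook, and for a sample assigned to $w_i$ its gradient is $\partial/\partial w_i\,\lVert e(x) - w_i\rVert^2 = 2\,(w_i - e(x))$, where the stop-gradient keeps $e(x)$ constant. Hence an SGD step with learning rate $\alpha$ on a single attracted sample reads $w_i \gets w_i - 2\alpha\,(w_i - e(x)) = (1-2\alpha)\,w_i + 2\alpha\, e(x)$. Matching the coefficient of $w_i$ (equivalently of $e(x)$) against the EMA form forces $1 - 2\alpha = \gamma$, that is $\alpha = (1-\gamma)/2$, which is exactly the claimed rescaling.

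The step I expect to be the main obstacle is reconciling the two rules when a codeword attracts more than one sample in a batch ($n_i > 1$). The EMA rule aggregates the pull toward the data as $(1-\gamma)\sum_j e(x_j)$ while leaving the shrink coefficient at $\gamma$, whereas the full-batch SGD gradient produces $2\alpha\,n_i\,w_i$ for the shrink term against only $2\alpha\sum_j e(x_j)$ for the pull term; these two coefficients balance only when $n_i = 1$. I would therefore phrase the equivalence per attracted sample, noting that fixing $N_i = 1$ is precisely what suppresses the usage-dependent normalization that would otherwise reintroduce the $n_i$ factor, so that summing the per-sample SGD steps reproduces the EMA accumulation term by term. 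With that convention the matching of coefficients is immediate, and the entire content of the proposition reduces to the elementary identification $\alpha = (1-\gamma)/2$.
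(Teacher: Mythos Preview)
Your proposal is correct and follows essentially the same route as the paper: compute the codebook gradient $2(w_i - e(x))$ from the middle term of~(\ref{eq:loss}), write the SGD step as $(1-2\alpha)w_i + 2\alpha\,e(x)$, and match against the EMA recursion to read off $\alpha = (1-\gamma)/2$. Your additional discussion of the $n_i>1$ case is more explicit than the paper's proof, which works with a single attracted sample and leaves the batch aggregation implicit.
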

\begin{proof}
Let $x$ be an input example such that $e(x)$ maps to a codebook word $q(x)=w_i$. The SGD gradient update for a codeword $w_i$ and the ordinary VQ loss~(\ref{eq:loss}) is
\begin{equation}\begin{split}
\frac{\partial\mathcal{L}}{\partial w_i} =
\frac{\partial}{\partial w_i}\bigg[
   L(q(x)) & + ||\mathrm{sg}[e(x)] - q(x)||^2 \\
   & + \gamma||e(x) - \mathrm{sg}[q(x)]||^2\bigg].
\end{split}\end{equation}

Only the second term concerns the codebook, and a given codeword $w_i$ is only affected by the examples that were mapped to it.
Thus the gradient simplifies to
\begin{equation}
\frac{\partial\mathcal{L}}{\partial w_i} =
2\left( w_i - e(x_j)\right),
\end{equation}
which inserted into the gradient update
\begin{equation}
    w_i \gets w_i - \alpha \frac{\partial \mathcal{L}}{\partial w_i}
\end{equation}
with substitution $\gamma = (1 - 2\alpha)$ yields
\begin{equation}
w_i \gets w_i\cdot\gamma + e(x_j)(1 - \gamma).
\end{equation}
\end{proof}
Values reported for $\gamma$ in~\cite{oord_neural_2017} indicate that higher learning rates for the codebook might be beneficiary for ordinary VQ training. The $\gamma$ hyperparameter should be set carefully in accordance with learning rate. In the next section, we investigate the effect of increasing the learning rate $10\times$.

We now see that when EMA tracks codeword usage running means $N_i$ it effectively rescales the learning rate individually for every codeword according to its demand. If in a given batch there are fewer codewords $w_i$ used than the running average, the learning rate will be lower. Conversely, if the usage increased, the learning rate will be higher.
In the next section we compare EMA with ordinary VQ-VAE with rescaled codebook learning rates, and see that they lead to slightly different outcomes.

\section{Experiments}
We compare the proposed data-dependent codebook reestimation with vanilla codebook learning algorithm through minimization of loss (\ref{eq:loss}), and rescaled learning rate for the codebook, either explicitly or through EMA. In light of the scaling mismatch issue between codewords and encoded outputs from Section~\ref{sec:scaling}, we experiment with batch normalization~\cite{ioffe2015batch}.

In order to test the proposed method in a broader context, we perform experiments on images (CIFAR-10), speech (Wall Street Journal), and handwriting (ScribbLelens~\cite{Dolfing20}). We investigate static and temporal domains in supervised and unsupervised fashion for different modalities\footnote{
Source code for replication available at \url{https://github.com/distsup/DistSup}}.
\begin{table}[t]
\caption{Higher codebook usage (perplexity) leads to lower phoneme error rate (PER) on the supervised WSJ task}\label{tab:wsj}
\centering
\begin{tabular}{@{}p{4.6cm}rr@{}}
\toprule
Model                                      & Dev PER      & Code Perplexity \\
\midrule
vanilla VQ                                 & 16.9         & 16 \\
+ BN                                       & 17.6         & 16.5 \\
+ BN + codebook LR                         & 13.1         & 151 \\
+ BN + EMA                                 & 10.5         & 61.8 \\
+ BN + data-dep reest                      & 10.4         & 393 \\
\mbox{+ BN + data-dep reest + codebook LR} & \textbf{9.8} & \textbf{574} \\
\midrule
no bottleneck                              & 11.6         & \textit{N/A} \\
\bottomrule
\end{tabular}
\end{table}

\begin{figure}[t]
\centering
{\footnotesize \input{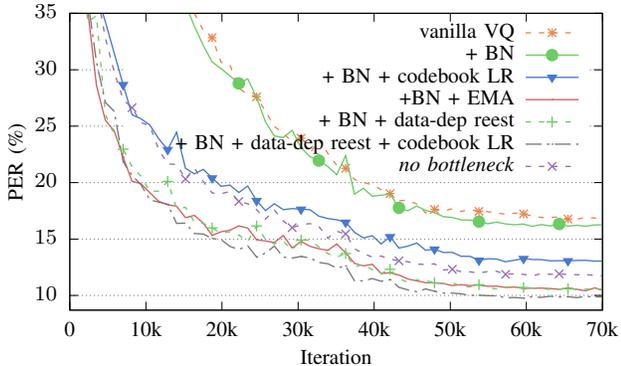}}
\caption{Wall Street Journal Dev93 supervised phoneme error rate (PER)}
\label{fig:wsj_training_curves}
\end{figure}

\subsection{Speech: Wall Street Journal}
We train a supervised speech recognition model and report phoneme error rates (PER).
Wall Street Journal (WSJ) is a corpus of news articles read by professional speakers, popular in the domain of speech recognition.
Our model is similar to Deep Speech 2~\cite{amodei2016deep} with an addition of a bottleneck.
The encoder is composed of a stack of strided convolutions followed by bidirectional recurrent layers, 
Encoded features are put through a VQ bottleneck with a single codebook. It quantizes the encoder's output at every timestep.
Finally, the quantized signal is passed through a $1\times 1$ convolution and \textit{SoftMax} followed by the CTC loss~\cite{graves_connectionist_2006}.

The inputs to the model are 80 Mel-filterbank bands with energy features extracted every 10ms with $\Delta$, $\Delta$-$\Delta$s, and global cepstral mean and variance normalization (CMVN). 
We train the model on 64-frame chunks with targets derived form forced alignments obtained using Tri3b model from Kaldi WSJ recipe~\cite{Povey_ASRU2011}. To get a clear picture of the VQ-VAE training process, we refrain from using regularization, which otherwise would be necessary for a small corpus like WSJ.

The model \textit{no bottleneck} serves as a baseline for quantized models. We contrast six variants of VQ training: base \emph{vanilla}, with batch normalization (\emph{BN}) inserted before the quantization, with BN and increased \emph{codebook learning rate}, using the \emph{EMA} training rule, and with \emph{data-dependent codebook reestimation} with and without \txt{increased codebook learning rates}.
We run hyperparameter searches over the learning rate and discount factor $\gamma$ for \emph{codebook learning rate} and \emph{EMA} models respectively. For EMA, we report results for the best models. For codebook learning rates, we establish a rule of thumb to set them to $10\times$ the regular learning rates. Wrong adjustment of these parameters could make the models worse than the \emph{vanilla} baseline.

We report phoneme error rates and codebook perplexity with no external language models in Table~\ref{tab:wsj} and Figure~\ref{fig:wsj_training_curves}. We notice that enabling the vanilla VQ bottleneck hurts the performance of the model.
Batch normalization \txt{together with} separate codebook learning rate help, but best results \txt{also} require data-dependent codebook reestimation. This model not only offers the highest codebook usage (\txt{indicated by highest} perplexity), but also reaches the error rate below the continuous baseline, demonstrating the regularizing effect of the information bottleneck.

\subsection{Handwriting: ScribbleLens}
\label{sec:scribble}
ScribbleLens~\cite{Scribble20,Dolfing20} is a freely available corpus of old handwritten Dutch from late 16th to mid-18th century, the time of Dutch East India company. It is segmented into lines of handwriting (Fig.~\ref{fig:tasman}). A subset of ScribbleLens is annotated with transcriptions.

\begin{figure}[tb]
    \centering
    \includegraphics[trim={0.5cm 0.35mm 6cm 0.3mm},clip,width=0.95\columnwidth]{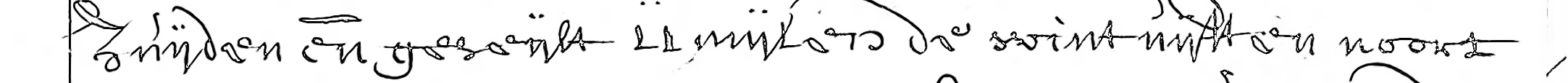}
    \caption{A sample training line from the ScribbleLens corpus~\cite{Scribble20,Dolfing20}}
    \label{fig:tasman}
\end{figure}

\txt{We treat lines of handwriting as multidimensional input features on which we train an unsupervised autoencoder. We use the same encoder architecture as for Wall Street Journal, followed by}
a VQ bottleneck with a single codebook (size $K=4096$), and a conditional PixelCNN~\cite{oord_pixel_2016} decoder. Input lines are scaled to be 32-pixels high. The encoder extracts a latent vector every four pixel columns.

\txt{Used alone, batch} normalization has a minor detrimental effect. Increased codebook learning rate and data-dependent codebook reestimation improve latent code usage and lead to lower reconstruction log-likelihoods measured in bits per dimension (BPD; Table~\ref{tab:scrib}). Again, improvements are related to a higher usage of the bottleneck capacity. 

We also report test set evidence lower bound (ELBO) expressed in bits per dimension~\cite{kingma_auto_2013,oord_neural_2017}:
\begin{equation}
\begin{split}
    \log p(x) =& \log \sum_{z}p(x|z)p(z) \geq ELBO = \\
    =& \log \mathbb{E}_{z~q(z|x)} p(x|z) - KL(q(z|x)||p(z)) = \\
    =& \log(p(x|z=q(x)) +\log p(z=q(x)).
\end{split}
\end{equation}
The last transition follows from the fact that $q(x)$ gives a deterministic encoding, $p(x|z)$ is the reconstruction loss,
and $p(z)$ is the prior latent code probability. We consider two formulations of the prior over latent vectors: a \emph{uniform} prior $p(z) = 1/K$, and a \emph{unigram} prior of observed codeword frequencies. The negative of ELBO (NELBO) has an intuitive interpretation: to transmit a data sample, we need to transmit the information contained in the latent features plus the information required to reconstruct the data sample from the conditional decoder probability. For ease of comparison with BPD, in Table~\ref{tab:scrib} we report the NELBOs in bits. The Uniform NELBO is computed as $\textrm{BPD} + \log_2(4096)/32/4$, i.e. the cost of reconstructing a single pixel augmented by the cost of transmitting a single latent amortized over all pixels this latent corresponds to, while for Unigram NELBO we account for the actual codebook us\txt{ag}e by computing $\textrm{BPD} + \log_2(\textrm{perplexity})/32/4$.

Ideally, an increase in bottleneck bandwidth should match the increase in reconstruction log-likelihood, keeping the ELBO constant. However, comparing the uniform and unigram NELBOs, we see that improvements in codebook usage translate to smaller likelihood improvements. This may be caused by imperfection of the PixelCNN decoder, unable to fully use the information in the latents.

\begin{table}[t]
    \centering
    \caption{On ScribbleLens, data-dependent codebook reestimation results in the fullest codebook usage, leading to the best reconstruction measured in bits per dim. Uniform and unigram ELBOs show that increased bottleneck bandwidth results in a smaller reconstruction likelihood improvement.}
    \label{tab:scrib}
    \addtolength{\tabcolsep}{-1pt}
    \begin{tabular}{@{}lrrrr@{}}
    \toprule
    Model & \makecell[r]{Rec.\\BPD} & Pplx & \makecell[c]{Uniform\\NELBO\\bits} & \makecell[c]{Unigram\\NELBO\\bits} \\
    \midrule
    vanilla VQ            &   0.213 &         322 &            0.307 &            \textbf{0.278} \\
    + BN                  &   0.216 &         260 &            0.309 &            \textbf{0.278} \\
    + BN + codebook LR    &   0.212 &         432 &            0.306 &            0.281 \\
    + BN + EMA            &   0.207 &        1118 &            0.301 &            0.287 \\ 
    + BN + data-dep reest &   \textbf{0.200} &        2388 &            \textbf{0.294} &            0.288 \\
    + BN + data-dep reest           &
    \multirow{2}{*}{\textbf{0.200}} &
    \multirow{2}{*}{\textbf{2446}} &
    \multirow{2}{*}{\textbf{0.294}} &
    \multirow{2}{*}{0.288} \\
    \hspace{0.65cm} + codebook LR  & \\

    \bottomrule
    \end{tabular}
    \addtolength{\tabcolsep}{-1pt} 
\end{table}

\subsection{Images: CIFAR-10}
\txt{On CIFAR-10 we train an unsupervised auto-encoder.} %
\txt{The model contains a convolutional encoder with residual connections, followed by  a VQ bottleneck and a deconvolutional decoder}. Our model follows closely~\cite{oord_neural_2017}, and we describe only the differences.
The network models a discrete distribution over $256$ output levels for RGB channels with \textit{SoftMax}. %
We use the Adam optimizer~\cite{kingma2014adam} with learning rate $5\mathrm{e}{-4}$. \txt{We also use Polyak checkpoint averaging in which the model is evaluated on parameters that are an exponential moving average of parameters during the most recent training steps \cite{polyak1992acceleration}}.

As even small images convey a lot of information, 
we consider two settings with multiple independent codebooks: 8 $\times$ 128 codewords (56 bits), and 5 $\times$ 1024 codewords (50 bits; Table~\ref{tab:cifar10}). 
Each encoded $e(x)$ is split into equally-sized blocks that are quantized independently.
This allows us to achieve high bottleneck bitrates with small codebooks, facilitating $k$-means training, which increases in difficulty with the number of centroids.
\begin{table}[t]
    \centering
    \caption{Unsupervised bits/dim (BPD) on CIFAR-10 test set for $8\times128$-codeword and $5\times1024$-codeword VQ bottlenecks (median of five runs)}
    \label{tab:cifar10}
    \begin{tabular}{@{}l  r r @{}}
    \toprule
    \multirow{2}{*}{\bf Model}  & \multicolumn{2}{c}{Reconstruction BPD}\\
                                & $8\times 128$ & $5\times 1024$\\
    \midrule
     VQ-VAE                     & 4.679       & 4.811       \\
    + BN                        & 4.677       & 4.809       \\
    + BN + codebook LR          & 4.677       & \textbf{4.796}      \\
    + BN + EMA                  & 4.683       & 4.798       \\
    + BN + data-dep reest       & 4.665       & 4.807       \\
    + BN + data-dep reest + codebook LR       & \textbf{4.659}        & 4.802      \\
    \midrule
    VQ-VAE \cite{oord_neural_2017} & \multicolumn{2}{r}{4.67 ($10\times 512$)} \\
    \bottomrule
    \end{tabular}
\end{table}

\begin{figure}[t]
\centering
{\footnotesize \input{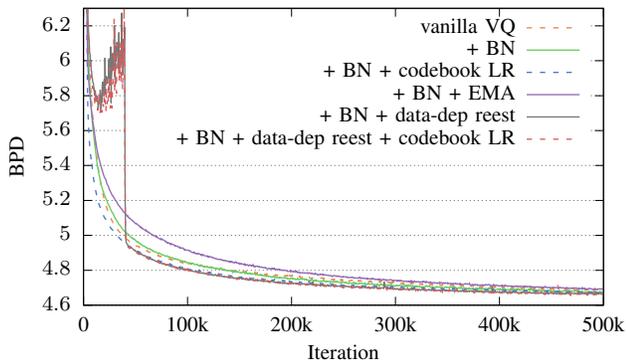}}
\caption{Unsupervised bits/dim (BPD) on CIFAR-10 test set for $8\times128$-codeword model during training with checkpoint averaging.
         During initial data-dependent reestimation iterations, BPD is higher for checkpoint-averaged models, because each codebook re-initialization breaks the averages. However, when the reestimation period is over, these models converge faster than others.}
\label{fig:cifar10_convergence}
\end{figure}

Table~\ref{tab:cifar10} summarizes test set bits/dim (BPD) for the models trained on the CIFAR-10 dataset. \txt{For this model,} placing a BN layer before the quantization leads to minor improvements in BPD in all settings. All models ultimately converge to similar values. Evaluating BPD during training (Figure~\ref{fig:cifar10_convergence}) shows that data-dependent reestimation improves convergence. The initial high values of BPD seen in the figure is caused by Polyak checkpoint averaging.

\section{Related Work}
Representation learning is a long-standing problem in machine learning. In auto-encoding systems, trained for input reconstruction a bottleneck  prevents the system from learning a trivial data representation, such as a multiplication by an orthonormal matrix. For instance, to prevent trivial encodings classical techniques, such as PCA or classical neural auto-encoders %
enforce a reduction of the dimensionality of the data. However, other constraints, such as sparsity \cite{olshausen_emergence_1996} or non-negativeness \cite{lee_learning_1999} \txt{have also been explored}.

The Variational Autoencdoer \cite{kingma_auto_2013} can be seen as an auto-encoder whose bottleneck limits the expected number of bits used to encode each sample. It therefore implements the information bottleneck~\cite{tishby1999information,alemi_deep_2016} principle. The limitation on encoding bitrate is made explicit in discrete latent variable models, such as the VQ-VAE~\cite{wu_variational_2018}. These have been shown to extract meaningful latent representations of speech in an unsupervised fashion~\cite{chorowski_2019_unsupervised}.

Deep Learning models are trained by gradient descent on a loss function. Discrete models by definition do not have a gradient. One possible solution is to treat them as stochastic, and optimize the expected value of the loss, which varies smoothly with model parameters and therefore has a well-defined gradient. This approach is employed in the Gumbel-Softmax reparametrization trick for the VAE \cite{jang_categorical_2016}, which yields a low-variance, but high bias estimator for the gradient. The gradient can also be estimated using the REINFORCE algorithm \cite{williams_simple_1992}, which is unbiased but has a high variance. The two approaches can be successfully combined \cite{tucker_rebar_2017}.

In contrast the VQ-VAE \cite{oord_neural_2017} operates in a deterministic mode, and computes a descent direction using backpropagation with the straight-through estimator \cite{bengio_estimating_2003}. Despite this approximation, it works well in practice.

Deterministic quantization with $k$-means can be seen as hard Expectation Maximization (EM)~\cite{roy_theory_2018}.
It can be replaced with soft EM, which performs probabilistic discrete quantization in order to improve training performance. Such bottleneck is an approximation to a continuous probabilistic bottleneck, similar to VAE~\cite{kingma_auto_2013}.
Probabilistic quantization can be applied only to improve training, and disabled during inference~\cite{sonderby_continuous_2017}. In addition,~\cite{sonderby_continuous_2017} introduces joint training of a prior on codewords, under which the most frequent codewords can be passed at a lower bit cost.

Classical acoustic unit discovery systems fit a probabilistic model which segments the speech and fits HMMs to each subword unit \cite{lee_nonparametric_2012}. Training of these systems requires Gibbs sampling or Variational Inference \cite{ondel_variational_2016}. Recently, they have been composed with neural acoustic models \cite{ebbers_hidden_2017,glarner_full_2018} by extending the VAE with a structured prior
\cite{johnson_composing_2016a}. In contrast, the VQ-VAE offers a much simpler approximation to the underlying probabilistic model, which can, however, be extended with an hmm-like segmental prior \cite{chorowski2019segmental}.

\section{Conclusions}
In this paper we have shown how VQ-VAE codebook learning can benefit from being viewed as on-line $k$-means clustering.
We propose the following \txt{enhancements} to the training procedure: increasing the codebook learning rate, \txt{enabling batch normalization prior to quantization}, and periodic batch codebook reestimations. Together, these \txt{three} changes stabilize training and lead to a broader codebook usage
and improved performance on supervised and unsupervised downstream tasks in image processing, speech recognition, and  handwritten unit discovery.
\section*{Acknowledgements}
We would like to thank Lucas Pagé-Caccia for help with CIFAR-10 experiments, and Benjamin van Niekerk for open sourcing his VQ-VAE experiments\footnote{\url{https://github.com/bshall/VectorQuantizedVAE}}.

The research reported here was conducted at the 2019 Frederick Jelinek Memorial Summer Workshop on Speech and Language Technologies, hosted at L'\'Ecole de Technologie Sup\'erieure (Montreal, Canada) and sponsored by Johns Hopkins University with unrestricted gifts from Amazon, Facebook, Google, and Microsoft. Authors also thank Polish National Science Center for funding under the
Sonata 2014/15/D/ST6/04402 grant the PLGrid project for computational resources on Prometheus cluster.

\bibliographystyle{IEEEbib}
\bibliography{refs}

\end{document}